\documentclass[letterpaper]{article} 
\usepackage{aaai2027}  
\usepackage[hyphens]{url}  
\usepackage{graphicx} 
\usepackage{natbib}  
\usepackage{caption} 
\usepackage{booktabs}
\usepackage{multirow}
\usepackage{amsmath}
\usepackage{amssymb}
\usepackage{amsthm}
\usepackage{bm}
\usepackage{algorithm}
\usepackage{algorithmic}

\usepackage{xr}
\nocopyright

\newtheorem{theorem}{Theorem}
\newtheorem{assumption}{Assumption}

\newcommand{\E}{\mathbb{E}}
\newcommand{\ind}{\mathbb{I}}

\title{When Interference Graphs Evolve: Doubly Robust Estimation of Dynamic Peer Effects}
\author{
    Xiaojing Du
}
\affiliations{
    Adelaide University
}

\begin{document}

\maketitle

\begin{abstract}
Peer effects are difficult to estimate when interaction graphs evolve because pre-assignment network history, dynamic peer exposure, and post-assignment network change have distinct causal roles. We introduce a controlled contrast framework that indexes potential outcomes by own treatment, temporally aggregated peer exposure, and a post-assignment evolution summary. Differences between the resulting means define own-treatment, peer-exposure, controlled network-evolution, and joint controlled contrasts rather than a mediation decomposition. We develop the Dynamic Network Doubly Robust estimator, DynaNet-DR, which combines a temporally factorized propensity with normalized augmentation. Under consistency, summary sufficiency, sequential exchangeability, positivity, nuisance convergence, and weak dependence, its canonical estimator is consistent when either the outcome regression or the propensity estimator is consistent. The reported implementation adds representative-score prediction, fixed clipping, and finite-sample stabilization. Semi-synthetic benchmarks on fixed real temporal graph sequences show favorable estimation accuracy among methods targeting the full profile. These benchmarks assess summary-indexed contrasts rather than counterfactual edge generation, and the MathOverflow study is an observational illustration under the stated assumptions.
\end{abstract}

\section{Introduction}

Interventions on social platforms, recommender systems, educational platforms, and digital marketplaces can affect treated individuals and their peers. When a user receives a promotion, recommendation, incentive, or collaboration tool, outcomes may also change among friends, teammates, classmates, or trading partners. Such peer effects violate the no-interference assumption of classical causal inference \citep{manski1993identification,hudgens2008toward,aronow2017estimating}. Because experimental and observational evaluations inform rollout and targeting decisions, analyses that ignore interference can conflate responses to a unit's own assignment with responses to peer assignments \citep{eckles2017design,forastiere2021identification}. The central question is how to define and estimate these responses when the interaction graph changes over time.

Platform networks evolve as users follow, comment, form teams, and trade. A user's neighborhood can change across periods, and brief interactions can affect later peer exposure. Static summaries based only on the contemporaneous fraction of treated neighbors may omit relevant temporal structure \citep{aronow2017estimating,savje2024causal,adhikari2025learning,lin2026modeling}. Dynamic exposure can depend on current and historical neighbors, edge strength, and temporal persistence. We therefore construct a structured exposure mapping with interpretable intervention levels and diagnose their empirical support.

Post-assignment network evolution introduces a separate causal role. Invitation and recommendation policies may alter subsequent interactions or local connectivity, while other evolution may occur independently of treatment. Pre-assignment graph history informs confounding adjustment and exposure construction; a post-assignment evolution summary defines an additional coordinate of the controlled intervention. Varying this summary while fixing own treatment and peer exposure yields a controlled network-evolution contrast, not a natural indirect effect or a component of a total-effect decomposition. The framework permits externally occurring or treatment-responsive evolution, but the reported benchmarks keep each observed graph sequence fixed and generate the summary as a post-assignment label; they do not generate counterfactual edges. Figure~\ref{fig:dag} previews the four controlled comparisons in natural language. Within each panel, the target differs from the reference only in the highlighted feature, and the comparisons are not additive pathways.

Prior work addresses related components of this problem. Exposure mappings and generalized propensity scores formalize interference for a specified graph \citep{hudgens2008toward,aronow2017estimating,forastiere2021identification}. Graph-learning methods estimate high-dimensional exposure representations \citep{ma2021causal,adhikari2025learning}, and recent studies develop doubly robust estimators for network interference \citep{chen2024doubly,khatami2025graph}. Dynamic-interference models incorporate evolving graphs into prediction \citep{lin2026modeling}; treatment-responsive-network and edge-intervention studies target related but distinct changes to graph structure \citep{comola2021treatment,chen2026connections}. Our framework uses pre-assignment graph history for adjustment and exposure construction, then includes a specified post-assignment evolution summary in the controlled mean profile.

This paper studies causal effect estimation under evolving network interference. We distinguish pre-assignment network history, used for adjustment and exposure construction, from a post-assignment network evolution summary, treated as a coordinate of the controlled intervention. We develop the Dynamic Network Doubly Robust estimator, DynaNet-DR, for the resulting controlled contrast profile. It combines temporal graph features, a temporally ordered propensity factorization, and normalized augmentation. We make four contributions.
\begin{itemize}
    \item We formalize evolving-network interference through a controlled contrast profile that separates pre-assignment network history, dynamic peer exposure, and a post-assignment evolution summary.
    \item We construct a dynamic exposure mapping from recent peer assignments, pre-assignment edge weights, and temporal decay, with interpretable intervention levels and explicit overlap diagnostics.
    \item We develop DynaNet-DR by combining a temporally ordered propensity factorization with normalized augmentation. We establish double robustness for its canonical form and distinguish that result from the reported finite-sample stabilization.
    \item We evaluate same-estimand accuracy and restricted-design scope on four fixed real temporal graph sequences with generated assignments, evolution-summary labels, and outcomes; isolate the adaptive gate in an ablation; report overlap and hidden-homophily diagnostics; and present MathOverflow as an observational illustration.
\end{itemize}

\begin{figure}[t]
\centering
\includegraphics[width=0.98\columnwidth]{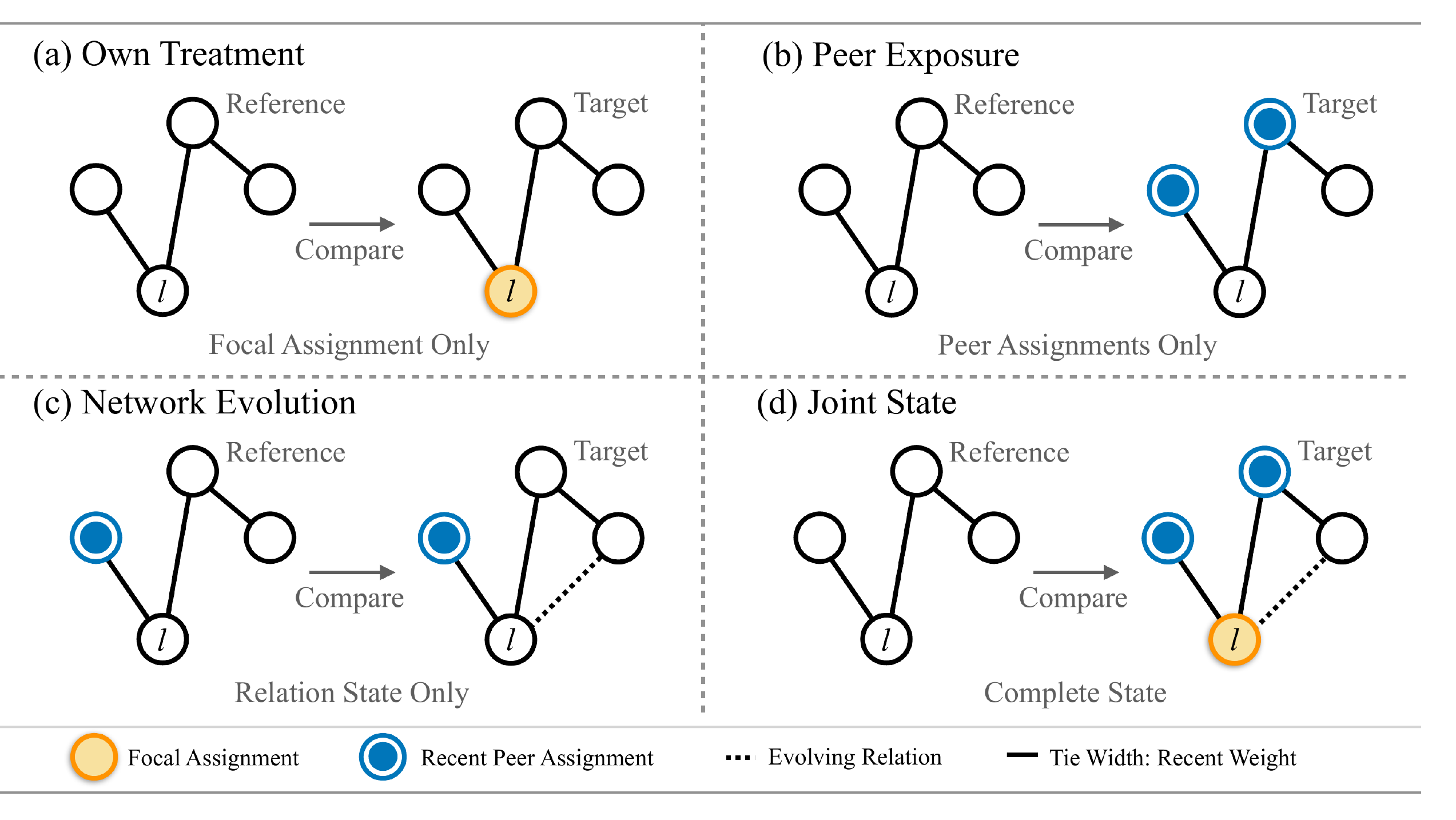}
\caption{Controlled comparisons for focal unit $i$. Each panel varies one highlighted feature while holding the others fixed.}
\label{fig:dag}
\end{figure}

\section{Problem Definition}

We define the dynamic networked observation model, the exposure and evolution summaries, the target mean, three coordinate-wise controlled contrasts, and a joint controlled contrast. Appendix~\ref{app:notation} summarizes the notation.

\paragraph{Dynamic networked observations.}
Consider $n$ units observed over $T$ time periods. At assignment time $t$, we observe a pre-assignment interaction graph $G^{-}_{t}$, a binary treatment vector $\bm{A}_t$ with $A_{it} \in \{0,1\}$, a subsequent graph $G^{+}_{t+1}$, and an outcome vector $\bm{Y}_{t+1}$ measured after the evolution summary. For each unit-time pair $(i,t)$, let $H_{it}$ contain observed covariates, treatments, outcomes, and network structure available before assignment. The index $t+1$ denotes the post-assignment target horizon rather than requiring evolution and outcome to be measured simultaneously; Section~5 and Appendix~\ref{app:impl} give the raw-bin mappings for the case study and benchmarks.

\paragraph{Peer exposure.}
The treatment vector of all other units is high-dimensional, so a dynamic exposure variable $Z_{it}$ aggregates treatments of relevant peers using recent network history, edge strength, and temporal decay. We use discrete levels $Z_{it} \in \{0, 1, \dots, K\}$ to define the target interventions and assess their empirical support.

\paragraph{Network evolution summary.}
Let $M_{i,t+1}$ summarize local graph evolution after assignment, such as tie formation, dissolution, or intensity change. Unlike pre-assignment graph history, $M_{i,t+1}$ is not included as an ordinary confounder; it is a coordinate of the controlled intervention. We take $M_{i,t+1} \in \{0,1\}$. The summary may describe externally occurring or treatment-responsive evolution, but the reported benchmark keeps the observed graph sequence fixed and generates $M$ as a label, as detailed in Section~5.

\paragraph{Potential outcomes.}
Let $Y_{i,t+1}(a,z,m)$ denote the potential outcome of unit $i$ when its own treatment, peer exposure, and network evolution summary are set to $(a,z,m)$. We target the marginal mean potential outcome
\begin{equation}
\mu(a,z,m) = \E\left[ Y_{i,t+1}(a,z,m) \right],
\label{eq:mu}
\end{equation}
where the expectation is over the target distribution of unit-times and pre-assignment histories. Equation~\eqref{eq:mu} defines a single-world controlled mean. Natural, interventional, and path-specific mediation effects require additional counterfactual structure.

\paragraph{Causal contrasts.}
The target mean $\mu(a,z,m)$ supports three coordinate-wise controlled contrasts. The \emph{own-treatment contrast} compares own-treatment levels while holding peer exposure and network evolution fixed:
\begin{equation}
\tau_D(z,m) = \mu(1,z,m) - \mu(0,z,m).
\label{eq:de}
\end{equation}
The \emph{peer-exposure contrast} compares peer-exposure levels while holding own treatment and network evolution fixed:
\begin{equation}
\tau_S(a; z, z', m) = \mu(a,z,m) - \mu(a,z',m).
\label{eq:pe}
\end{equation}
The \emph{controlled network-evolution contrast} compares post-assignment evolution-summary levels while holding own treatment and peer exposure fixed:
\begin{equation}
\tau_M(a,z; m,m') = \mu(a,z,m) - \mu(a,z,m').
\label{eq:nme}
\end{equation}
For two joint intervention levels $\theta_1=(a_1,z_1,m_1)$ and $\theta_0=(a_0,z_0,m_0)$, write $\mu(\theta_j)=\mu(a_j,z_j,m_j)$. We define the joint controlled contrast $\tau_J(\theta_1,\theta_0)=\mu(\theta_1)-\mu(\theta_0)$. Together, the four contrast families form a profile of distinct state comparisons without imposing an additive decomposition. The estimation task is to recover these means and contrasts under observed time-varying confounding and network dependence, with explicit diagnostics for limited overlap. Homophily left unmeasured in $H_{it}$ can violate sequential exchangeability.

\section{Identification}

We state the identification assumptions and derive the observed-data functional for the controlled means. Section~4 develops the estimator and states its theoretical scope.

\subsection{Assumptions}

\begin{assumption}[Consistency]
\label{as:consistency}
For each unit-time pair, the observed outcome equals the potential outcome indexed by the observed own treatment, peer exposure, and network evolution summary.
\end{assumption}

\begin{assumption}[Exposure and evolution-summary sufficiency]
\label{as:sufficiency}
Conditional on $H_{it}$, the potential outcome at the target horizon depends on other units' assignments and dynamic network paths only through $A_{it}$, $Z_{it}$, and $M_{i,t+1}$. Graph configurations mapped to the same $(z,m)$ are treated as outcome-equivalent versions.
\end{assumption}

\begin{assumption}[Sequential exchangeability]
\label{as:exchangeability}
Conditional on $H_{it}$, the joint assignment $(A_{it}, Z_{it})$ is independent of the potential outcomes $\{Y_{i,t+1}(a,z,m)\}$. In addition, conditional on $(H_{it}, A_{it}, Z_{it})$, the observed network evolution summary $M_{i,t+1}$ is independent of the controlled potential outcomes $\{Y_{i,t+1}(a,z,m)\}$.
\end{assumption}

\begin{assumption}[Positivity]
\label{as:positivity}
Every treatment, peer exposure, and network evolution level used in the target estimand has positive probability conditional on histories in the target population: the joint treatment-exposure probability and the conditional network-evolution probability are bounded away from zero for target levels.
\end{assumption}

Assumptions~\ref{as:consistency} through \ref{as:positivity} follow sequential identification arguments \citep{robins1986new,hernan2020causal} and their network counterparts under exposure mappings \citep{hudgens2008toward,forastiere2021identification}. Assumption~\ref{as:sufficiency} specifies the summaries and their outcome-equivalent versions, which is substantive when exposure mappings may be misspecified \citep{savje2024causal}. Stochastic interventions over graph configurations provide a possible extension. The second part of Assumption~\ref{as:exchangeability} addresses confounding of the post-assignment summary.

Figure~\ref{fig:causaldag} summarizes the assumed structure. Under Assumption~\ref{as:exchangeability}, $H_{it}$ is sufficient for the two stated sequential exchangeability conditions; in the default benchmark, its nuisance representation includes $U_i$. Summary sufficiency and exchangeability are substantive assumptions, not consequences of the representation.

\paragraph{Where homophily enters.}
In the benchmark, homophily-related confounding is represented by a generated community-correlated unit trait $U_i$. The trait enters the treatment, evolution-label, and outcome models, while its community correlation associates it with peer exposure and the fixed graph structure. The default benchmark includes $U_i$ in the nuisance features; Appendix~\ref{app:hidden} reports a sensitivity analysis that withholds it. As in observational network studies more generally, peer correlation is not separable from contagion without additional assumptions \citep{shalizi2011homophily}.

\begin{figure}[t]
\centering
\includegraphics[width=0.80\columnwidth]{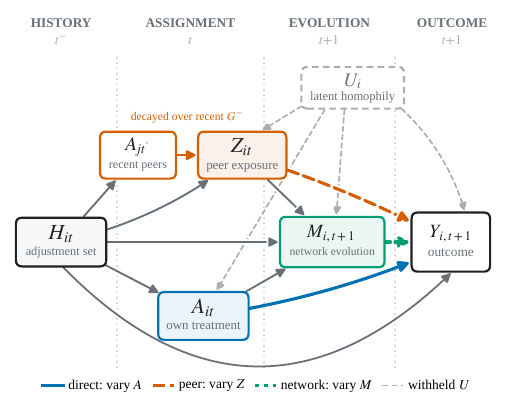}
\caption{Time ordering. Peer treatments are aggregated over pre-assignment graphs into $Z_{it}$. The post-assignment summary $M_{i,t+1}$ precedes the outcome. The generated trait $U_i$ is included by default and withheld in the sensitivity analysis.}
\label{fig:causaldag}
\end{figure}

\subsection{Identification}

\begin{theorem}[Identification of controlled means]
\label{thm:identification}
Under Assumptions~\ref{as:consistency} through \ref{as:positivity}, the mean potential outcome $\mu(a,z,m)$ is identified by
\begin{equation}
\begin{split}
\mu(a,z,m) = \E\bigl[ \E[ Y_{i,t+1} \mid {}& A_{it}=a, Z_{it}=z, \\
& M_{i,t+1}=m, H_{it} ] \bigr].
\end{split}
\label{eq:identification}
\end{equation}
Consequently, all contrasts of $\mu(a,z,m)$, including Eqs.~\eqref{eq:de} to \eqref{eq:nme}, are identifiable.
\end{theorem}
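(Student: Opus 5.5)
The plan is the standard sequential g-formula argument. We condition successively on the pre-assignment history, then on the joint assignment, then on the post-assignment summary, and at each layer we remove the intervention indices using exchangeability, positivity, and consistency. Fix a target level $(a,z,m)$ and write $Y(a,z,m)$ for $Y_{i,t+1}(a,z,m)$.

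\textbf{Step 1 (reduction to the controlled potential outcome).} Assumption~\ref{as:sufficiency} guarantees that $Y(a,z,m)$ is a well-defined object. Conditional on $H_{it}$, every configuration of other units' assignments and network paths that maps to $(a,z,m)$ yields the same potential outcome. This allows Assumption~\ref{as:consistency} to be applied at the level of the summaries. By the tower property,
\[
\mu(a,z,m)=\E\bigl[\E[Y(a,z,m)\mid H_{it}]\bigr].
\]

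\textbf{Step 2 (first exchangeability layer).} The first part of Assumption~\ref{as:exchangeability} states that $(A_{it},Z_{it})$ is independent of the potential outcomes given $H_{it}$. Hence
\[
\E[Y(a,z,m)\mid H_{it}]=\E[Y(a,z,m)\mid H_{it},A_{it}=a,Z_{it}=z].
\]
The conditioning event has positive probability for target histories by Assumption~\ref{as:positivity}, so the conditional expectation is well defined.

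\textbf{Step 3 (second exchangeability layer).} The second part of Assumption~\ref{as:exchangeability} states that $M_{i,t+1}$ is independent of the potential outcomes given $(H_{it},A_{it},Z_{it})$. Hence
\[
\E[Y(a,z,m)\mid H_{it},A_{it}=a,Z_{it}=z]=\E[Y(a,z,m)\mid H_{it},A_{it}=a,Z_{it}=z,M_{i,t+1}=m].
\]
Positivity of the conditional evolution probability makes this conditioning legitimate. On the event $\{A_{it}=a,Z_{it}=z,M_{i,t+1}=m\}$, consistency gives $Y(a,z,m)=Y_{i,t+1}$. Substituting this and chaining Steps 1 through 3 yields Eq.~\eqref{eq:identification}.

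\textbf{Step 4 (contrasts).} Every contrast in Eqs.~\eqref{eq:de} to \eqref{eq:nme}, and likewise $\tau_J$, is a difference of such means. Each mean is a functional of the observed-data distribution, so each contrast is identified as well.

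\textbf{Main obstacle.} The delicate point is Step 3. Because $M_{i,t+1}$ is post-assignment, we need the stated conditional independence to hold jointly for the whole family $\{Y(a,z,m)\}$, not only for the components indexed by the observed $(A_{it},Z_{it})$. Step 2 must also be arranged so that it does not require $M$ to be independent of $Y$ marginally.

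I would handle this by combining the two layers into a single statement. Applying the two parts of Assumption~\ref{as:exchangeability} in sequence gives
\[
Y(a,z,m)\perp\!\!\!\perp (A_{it},Z_{it},M_{i,t+1})\mid H_{it},
\]
which follows from the chain rule for conditional independence. I would then perform a single conditioning step using the joint positivity.

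Network dependence across units needs no separate treatment. The identification argument concerns only the marginal law of a generic unit-time, whereas dependence matters only for the estimation results in Section~4.
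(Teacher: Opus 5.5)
Your proof is correct and follows the same sequential g-formula route as the paper: exchangeability for $(A_{it},Z_{it})$ given $H_{it}$, then for $M_{i,t+1}$ given $(H_{it},A_{it},Z_{it})$, with positivity making each conditioning well defined and consistency replacing $Y_{i,t+1}(a,z,m)$ by $Y_{i,t+1}$ before averaging over histories. Your one-step variant using the contraction property of conditional independence is an equivalent repackaging. The concern in your ``main obstacle'' paragraph is unnecessary, since Step~3 only needs the independence for $Y_{i,t+1}(a,z,m)$ on $\{A_{it}=a,Z_{it}=z\}$, and the assumption already grants it.
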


\begin{proof}
Consistency connects observed and potential outcomes, sequential exchangeability removes confounding for $(A_{it}, Z_{it})$ and then $M_{i,t+1}$, and positivity makes the regression well defined at every target level. Averaging over histories yields Eq.~\eqref{eq:identification}, as derived in full in Appendix~\ref{app:proof}.
\end{proof}

\section{Method}

The Dynamic Network Doubly Robust estimator, DynaNet-DR, combines a structured dynamic exposure, a post-assignment evolution summary, temporally held-out nuisance estimation, and normalized augmentation. Algorithm~\ref{alg:dynanet} in Appendix~\ref{app:impl} summarizes the reported pipeline.

\subsection{Dynamic Exposure Representation}

The reported exposure module uses fixed temporal decay and edge weights to aggregate recent peer treatments into a continuous score $S_{it}$, then discretizes it into low, medium, and high levels $Z_{it}$. The canonical estimator below treats $Z$ as the intervention and uses $Q(a,z,m,h)$. In the reported implementation, $S_{it}$ is also supplied to the outcome learner: observed-outcome residuals use the observed score, whereas a target prediction at level $z$ sets the score to its training conditional mean $s_z$. We refer to this as a \emph{representative-score plug-in approximation}. It is not generally equivalent to integrating a nonlinear outcome regression over the within-level score distribution. The benchmark outcome is generated from discrete $Z$ rather than $S$, but this score-irrelevance property is not assumed for the observational application.

\subsection{Network Evolution Representation}

In the general framework, $M_{i,t+1}$ may summarize a difference between post- and pre-assignment local graphs. The semi-synthetic benchmark instead generates a binary post-assignment label conditional on the simulated variables while keeping the real graph sequence fixed; the MathOverflow application uses an indicator for gaining a previously unseen neighbor. In each case, $M$ enters the target and nuisance models as a controlled level, not as a pre-assignment adjustment variable. The estimand does not by itself identify the effect of a particular edge-editing policy.

\subsection{Nuisance Models with Temporal Graph Features}

We fit gradient-boosted outcome and propensity models to handcrafted node-time features $\phi_{it}$ from $H_{it}$. The outcome model conditions on $\phi_{it}$ and $(a,z,m)$; the propensity models estimate the joint assignment $g_{AZ,0}(a,z\mid H_{it})$ and the subsequent summary mechanism $g_{M,0}(m\mid a,z,H_{it})$. The training window is divided into three contiguous blocks. Each nuisance model is fit on the complement of one block, and the three predictions are averaged on a later estimation window not used for fitting. This is a temporally held-out nuisance ensemble rather than observation-level cross-fitting of the estimation sample; theoretical consistency therefore requires nuisance convergence on the estimation-window distribution.

\subsection{Canonical Estimator and Reported Stabilization}

Let
\begin{equation}
\begin{split}
Q_0(a,z,m,h) = \E[ Y_{i,t+1} \mid {}& A_{it}=a, Z_{it}=z, \\
& M_{i,t+1}=m, H_{it}=h ].
\end{split}
\label{eq:outcome}
\end{equation}
Equations~\eqref{eq:outcome} through \eqref{eq:dr} use the canonical discrete-$Z$ regression $\widehat Q$. For the reported implementation, let $\widehat Q^S(a,z,m,h,s)$ denote the score-augmented learner. Its target prediction substitutes $\widehat Q^S(a,z,m,H_{it},s_z)$ for $\widehat Q(a,z,m,H_{it})$, whereas the observed residual uses $\widehat Q^S(A_{it},Z_{it},M_{i,t+1},H_{it},S_{it})$. These substitutions define the representative-score statistic and coincide with the canonical estimator only under the compatibility condition stated below. For $\theta=(a,z,m)$, write $\widehat Q(\theta,h)=\widehat Q(a,z,m,h)$. We factor the generalized propensity in temporal order,
\begin{equation}
g_0(a,z,m \mid h) = g_{AZ,0}(a,z \mid h) \, g_{M,0}(m \mid a,z,h).
\label{eq:propensity}
\end{equation}
Define
\begin{equation}
\widehat{W}_{\theta}(i,t) =
\frac{\ind\{(A_{it},Z_{it},M_{i,t+1})=\theta\}}
{\widehat g(\theta\mid H_{it})}.
\label{eq:weights}
\end{equation}
For the canonical estimator, define
\[
\widehat Q^{\mathrm{obs}}_{it}
=\widehat Q(A_{it},Z_{it},M_{i,t+1},H_{it}).
\]
For the representative-score statistic, instead define
\[
\widehat Q^{\mathrm{obs}}_{it}
=\widehat Q^S(A_{it},Z_{it},M_{i,t+1},H_{it},S_{it}).
\]
In either case, let $R_{it}=Y_{i,t+1}-\widehat Q^{\mathrm{obs}}_{it}$. The normalized correction is
\begin{equation}
\widehat c(\theta)=
\frac{\sum_{(i,t)}\widehat W_{\theta}(i,t)R_{it}}
{\sum_{(i,t)}\widehat W_{\theta}(i,t)}.
\label{eq:corr}
\end{equation}
For target levels $\theta_1,\theta_0$ and $N$ estimation-window unit-time records, we write
\begin{equation}
\begin{split}
\widehat\tau(\theta_1,\theta_0)={}&
\frac{1}{N}\sum_{(i,t)}
\{\widehat Q(\theta_1,H_{it})-\widehat Q(\theta_0,H_{it})\}\\
&+\widehat\lambda\{\widehat c(\theta_1)-\widehat c(\theta_0)\}.
\end{split}
\label{eq:dr}
\end{equation}
The \emph{canonical} normalized augmented estimator sets $\widehat\lambda=1$ and uses the unclipped estimated propensity. It is the estimator covered by Theorem~\ref{thm:dr}.

The reported implementation first clips the estimated joint propensity to $[0.02,0.98]$. For each target cell, the inverse-weight effective sample size is $(\sum_{i,t}\widehat W_\theta(i,t))^2/\sum_{i,t}\widehat W_\theta(i,t)^2$; $\widehat n_{\mathrm{eff}}$ is the smaller of the two cell-specific values. This diagnostic measures weight concentration, not the number of independent observations under network dependence. Let $\widehat d=\widehat c(\theta_1)-\widehat c(\theta_0)$ and let $\widehat V(\widehat d)$ be its node-clustered variance estimate. With $n_{\min}^{\ast}=60$, the implementation sets
\begin{equation}
\widehat\lambda=\frac{\widehat d^{\,2}}
{\widehat d^{\,2}+\kappa\widehat V(\widehat d)},\qquad
\kappa=\max\{4,16n_{\min}^{\ast}/\widehat n_{\mathrm{eff}}\},
\label{eq:gate}
\end{equation}
then sets $\widehat\lambda=0$ when $\widehat n_{\mathrm{eff}}<n_{\min}^{\ast}$ or the displayed value is below $0.25$. These operations are finite-sample stabilization choices.

\begin{theorem}[Double robustness of the canonical estimator]
\label{thm:dr}
Consider Eq.~\eqref{eq:dr} with $\widehat\lambda=1$ and an unclipped propensity estimate, or with clipping that is asymptotically inactive. Suppose Assumptions~\ref{as:consistency} through \ref{as:positivity} hold. Both temporally held-out nuisances converge in $L_2$ on the estimation-window distribution to finite limits $\overline Q$ and $\overline g$, with either $\overline Q=Q_0$ or $\overline g=g_0$. For each target cell, the estimated inverse weights and the terms in Eqs.~\eqref{eq:corr} through \eqref{eq:dr} are uniformly square-integrable, and the normalized denominators converge to positive limits. Assume further that, conditional on the nuisance-training data, estimation-window moments converge to their target-population counterparts and that a weak-dependence law of large numbers holds for the data-adaptive node-time terms. Then $\widehat\tau(\theta_1,\theta_0)$ is consistent for $\mu(\theta_1)-\mu(\theta_0)$.
\end{theorem}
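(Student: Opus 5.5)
By linearity it suffices to show that, for each target cell $\theta=(a,z,m)$,
\[
\widehat\mu(\theta)=\frac1N\sum_{(i,t)}\widehat Q(\theta,H_{it})+\widehat c(\theta)
\]
is consistent for $\mu(\theta)$. Taking the difference over $\theta_1$ and $\theta_0$ with $\widehat\lambda=1$ then gives the claim. The plan has two steps. First, replace every estimated nuisance by its limit $(\overline Q,\overline g)$ and take probability limits of the numerator and denominator separately. Second, check that the resulting population functional equals the identified mean from Theorem~\ref{thm:identification} whenever $\overline Q=Q_0$ or $\overline g=g_0$.

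\textbf{Step 1: probability limits.} Work conditional on the nuisance-training data, so that $\widehat Q$ and $\widehat g$ are fixed functions on the estimation window. Split each averaged term into its value at $(\overline Q,\overline g)$ plus a remainder. Examples are $N^{-1}\sum \widehat W_\theta R$ and $N^{-1}\sum\widehat W_\theta$.
\begin{itemize}
\item The remainders are controlled by Cauchy--Schwarz. The $L_2$ convergence of $\widehat Q\to\overline Q$ and $\widehat g\to\overline g$, together with uniform square-integrability of the weights and residual terms, makes the expected absolute remainder vanish.
\item Positivity and the positive limits of the normalized denominators keep the inverse terms under control.
\item The assumed weak-dependence law of large numbers for the data-adaptive node-time terms, combined with convergence of estimation-window moments to target-population moments, gives
\[
N^{-1}\sum \overline Q(\theta,H)\to \E[\overline Q(\theta,H)],\qquad
N^{-1}\sum \overline W_\theta\to \E[\overline W_\theta],\qquad
N^{-1}\sum \overline W_\theta(Y-\overline Q^{\rm obs})\to \E[\overline W_\theta(Y-\overline Q(\theta,H))].
\]
The last limit uses $\overline W_\theta\neq0$ only when the observed cell equals $\theta$.
\end{itemize}
Applying the continuous mapping theorem to the ratio then yields
\[
\widehat\mu(\theta)\to \E[\overline Q(\theta,H)]+\frac{\E[\overline W_\theta\{Y-\overline Q(\theta,H)\}]}{\E[\overline W_\theta]}.
\]

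\textbf{Step 2: double robustness of the limit.} Iterate expectations over $H$ and use $\E[\ind\{\cdot=\theta\}\mid H]=g_0(\theta\mid H)$.
\begin{itemize}
\item \emph{Case $\overline Q=Q_0$.} Conditional on $H$ and the cell, $\E[Y-Q_0(\theta,H)]=0$, so the numerator of the correction vanishes. The limit is $\E[Q_0(\theta,H)]$, which equals $\mu(\theta)$ by Theorem~\ref{thm:identification}. The denominator only needs to be positive, which is assumed.
\item \emph{Case $\overline g=g_0$.} The denominator is $\E[g_0/g_0]=1$. The numerator becomes $\E[Q_0(\theta,H)-\overline Q(\theta,H)]$. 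The $\overline Q$ terms cancel, leaving $\E[Q_0(\theta,H)]=\mu(\theta)$.
\end{itemize}
The factorization in Eq.~\eqref{eq:propensity} matters here because $g_0$ must be the true joint probability of the cell. That holds since $g_{AZ,0}$ and $g_{M,0}$ are the true sequential conditionals. Sequential exchangeability is used only through Theorem~\ref{thm:identification}.

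\textbf{Clipping.} If clipping is asymptotically inactive, the clipped and unclipped weights agree with probability tending to one on the relevant event. The same limits therefore apply.

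\textbf{Main obstacle.} The hardest part is Step 1: transferring the law of large numbers to data-adaptive terms under network dependence. The nuisances are not cross-fitted within the estimation sample; they are fitted on an earlier window. My first attempt would therefore condition on the training window, so the summands become fixed functions of estimation-window data. I would then apply the assumed weak-dependence LLN conditionally and finish with a dominated-convergence argument to pass to the unconditional statement. The remainder terms need uniform integrability rather than merely pointwise $L_2$ convergence, because $\widehat W$ multiplies $\widehat Q-\overline Q$. I expect a Cauchy--Schwarz bound of the form $\|\widehat W\|_{2}\|\widehat Q-\overline Q\|_{2}$ to be the key inequality.
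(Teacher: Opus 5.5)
Your proposal is correct and follows the paper's route. You reduce to per-cell limits of the plug-in term plus the normalized correction. You then show the correction vanishes when $\overline Q=Q_0$, and that when $\overline g=g_0$ its denominator tends to one and it converges to $\E\{Q_0(\theta,H)-\overline Q(\theta,H)\}$, which cancels the plug-in bias. Your Step 1 only makes explicit the remainder bounds and the law-of-large-numbers steps that the paper's short proof delegates to the stated assumptions.
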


Inverse-weight square integrability follows, for example, if the estimated target-cell propensity is bounded away from zero with probability tending to one. A sufficient within-window dependence condition is that, conditional on nuisance-training data, the $N$ node-time records admit a dependency graph with maximum degree $\Delta_N=o(N)$ and uniformly bounded second moments. The empirical weight-based effective sample size summarizes weight concentration; it does not establish temporal stability or this dependence condition.

\begin{proof}
If $\widehat Q\to Q_0$, the plug-in term converges to the target contrast and each normalized residual correction converges to zero. If $\widehat g\to g_0$, the normalized denominator converges to one and $\widehat c(\theta)$ converges to the target-cell bias correction $\E\{Q_0(\theta,H)-\overline Q(\theta,H)\}$, so the difference of corrections removes the plug-in contrast bias when $\widehat\lambda=1$ \citep{bang2005doubly,vanderlaan2006targeted}.
\end{proof}

For the adaptive estimator, suppose both target-cell effective sample sizes diverge, $\widehat V(\widehat d)\to0$, and clipping becomes asymptotically inactive. When the limiting correction difference is nonzero, the hard gates are eventually inactive and $\widehat\lambda\to1$; when it is zero, the plug-in contrast has the correct limit under either nuisance-correct branch. Thus the adaptive estimator shares the canonical limit under these conditions. The theorem does not cover the reported active fixed clip or the representative-score approximation without an additional within-level score-irrelevance or correct-marginalization condition.

\paragraph{Inference.}
For DynaNet-DR, we report standard errors from the empirical variance of the unshrunk influence scores with node-level clustering. The resulting intervals are nominal: node clustering does not establish a central limit theorem for cross-node dependence induced by an evolving graph \citep{ogburn2024causal,leung2020treatment}. We therefore report their empirical coverage in the semi-synthetic benchmarks without claiming general coverage validity.

Before interpreting a target contrast, we inspect cell counts, estimated propensities, and effective sample sizes. These diagnostics indicate observed support; they do not verify conditional positivity.

\section{Experiments}

We study same-estimand accuracy (\textbf{RQ1}), the estimand scope of restricted designs and ablations (\textbf{RQ2}), nuisance-model stress tests (\textbf{RQ3}), and empirical coverage of the reported nominal intervals (\textbf{RQ4}).

\begin{table*}[t]
\centering
\footnotesize
\setlength{\tabcolsep}{1.5mm}
\begin{tabular}{llccccc}
\toprule
Dataset & Method & DE & PE & CNE & JCC & Coverage \\
\midrule
\multirow{8}{*}{CollegeMsg} & Naive-A & 0.722$\pm$0.042 & 0.552$\pm$0.000 & 0.326$\pm$0.000 & 0.157$\pm$0.043 & 0.00 \\
 & Static-Z OR & 0.099$\pm$0.043 & 0.354$\pm$0.029 & 0.326$\pm$0.000 & 0.591$\pm$0.055 & 0.01 \\
 & Static-Z DR & 0.103$\pm$0.024 & 0.279$\pm$0.051 & 0.326$\pm$0.000 & 0.503$\pm$0.054 & 0.10 \\
 & GPS$^\dagger$ & 0.147$\pm$0.025 & 0.417$\pm$0.044 & 0.326$\pm$0.000 & 0.878$\pm$0.056 & 0.00 \\
 & DynInt$^\dagger$ & 0.069$\pm$0.032 & 0.062$\pm$0.026 & 0.326$\pm$0.000 & 0.254$\pm$0.053 & 0.04 \\
 & TL$^\dagger$ & 0.056$\pm$0.021 & 0.053$\pm$0.034 & 0.049$\pm$0.028 & 0.044$\pm$0.029 & 0.17 \\
 & GML-DR$^\dagger$ & 0.052$\pm$0.029 & \textbf{0.051$\pm$0.028} & 0.044$\pm$0.032 & 0.055$\pm$0.033 & 0.87 \\
 & DynaNet-DR & \textbf{0.049$\pm$0.017} & \textbf{0.051$\pm$0.027} & \textbf{0.038$\pm$0.016} & \textbf{0.028$\pm$0.020} & 0.91 \\
\midrule
\multirow{8}{*}{email-Eu} & Naive-A & 0.812$\pm$0.077 & 0.552$\pm$0.000 & 0.326$\pm$0.000 & 0.083$\pm$0.056 & 0.06 \\
 & Static-Z OR & 0.089$\pm$0.040 & 0.310$\pm$0.028 & 0.326$\pm$0.000 & 0.558$\pm$0.044 & 0.00 \\
 & Static-Z DR & 0.097$\pm$0.033 & 0.300$\pm$0.023 & 0.326$\pm$0.000 & 0.540$\pm$0.043 & 0.23 \\
 & GPS$^\dagger$ & 0.061$\pm$0.013 & 0.351$\pm$0.036 & 0.326$\pm$0.000 & 0.689$\pm$0.045 & 0.01 \\
 & DynInt$^\dagger$ & 0.068$\pm$0.039 & 0.039$\pm$0.027 & 0.326$\pm$0.000 & 0.250$\pm$0.030 & 0.11 \\
 & TL$^\dagger$ & 0.079$\pm$0.041 & 0.092$\pm$0.050 & 0.089$\pm$0.043 & 0.082$\pm$0.041 & 0.07 \\
 & GML-DR$^\dagger$ & 0.083$\pm$0.043 & 0.076$\pm$0.028 & 0.079$\pm$0.034 & 0.059$\pm$0.033 & 0.99 \\
 & DynaNet-DR & \textbf{0.026$\pm$0.019} & \textbf{0.064$\pm$0.038} & \textbf{0.042$\pm$0.017} & \textbf{0.054$\pm$0.047} & 1.00 \\
\midrule
\multirow{8}{*}{PrimarySchool} & Naive-A & 0.633$\pm$0.120 & 0.552$\pm$0.000 & 0.326$\pm$0.000 & 0.247$\pm$0.121 & 0.03 \\
 & Static-Z OR & 0.088$\pm$0.047 & 0.276$\pm$0.076 & 0.326$\pm$0.000 & 0.525$\pm$0.108 & 0.04 \\
 & Static-Z DR & 0.089$\pm$0.042 & 0.267$\pm$0.067 & 0.326$\pm$0.000 & 0.536$\pm$0.109 & 0.41 \\
 & GPS$^\dagger$ & 0.267$\pm$0.074 & 0.514$\pm$0.042 & 0.326$\pm$0.000 & 1.092$\pm$0.085 & 0.00 \\
 & DynInt$^\dagger$ & 0.080$\pm$0.041 & 0.156$\pm$0.084 & 0.326$\pm$0.000 & 0.263$\pm$0.176 & 0.04 \\
 & TL$^\dagger$ & 0.165$\pm$0.087 & 0.203$\pm$0.148 & 0.208$\pm$0.153 & 0.217$\pm$0.228 & 0.09 \\
 & GML-DR$^\dagger$ & 0.146$\pm$0.063 & 0.176$\pm$0.094 & 0.192$\pm$0.131 & 0.180$\pm$0.141 & 0.93 \\
 & DynaNet-DR & \textbf{0.064$\pm$0.035} & \textbf{0.147$\pm$0.074} & \textbf{0.096$\pm$0.044} & \textbf{0.154$\pm$0.125} & 0.99 \\
\midrule
\multirow{8}{*}{HighSchool} & Naive-A & 0.784$\pm$0.101 & 0.552$\pm$0.000 & 0.326$\pm$0.000 & 0.123$\pm$0.060 & 0.09 \\
 & Static-Z OR & 0.099$\pm$0.049 & 0.309$\pm$0.075 & 0.326$\pm$0.000 & 0.573$\pm$0.105 & 0.01 \\
 & Static-Z DR & 0.118$\pm$0.066 & 0.267$\pm$0.042 & 0.326$\pm$0.000 & 0.474$\pm$0.116 & 0.31 \\
 & GPS$^\dagger$ & 0.341$\pm$0.056 & 0.492$\pm$0.057 & 0.326$\pm$0.000 & 1.147$\pm$0.067 & 0.00 \\
 & DynInt$^\dagger$ & 0.062$\pm$0.038 & 0.115$\pm$0.094 & 0.326$\pm$0.000 & 0.244$\pm$0.118 & 0.10 \\
 & TL$^\dagger$ & 0.113$\pm$0.054 & 0.102$\pm$0.052 & 0.106$\pm$0.063 & 0.096$\pm$0.057 & 0.20 \\
 & GML-DR$^\dagger$ & 0.106$\pm$0.070 & 0.091$\pm$0.059 & 0.137$\pm$0.081 & 0.085$\pm$0.054 & 0.94 \\
 & DynaNet-DR & \textbf{0.067$\pm$0.024} & \textbf{0.087$\pm$0.058} & \textbf{0.071$\pm$0.039} & \textbf{0.077$\pm$0.051} & 0.99 \\
\bottomrule
\end{tabular}
\caption{RMSE by contrast family and empirical coverage of node-clustered nominal intervals. Bold marks the lowest RMSE among full-target implementations, and $\dagger$ denotes adaptations. Restricted designs have different targets, and coverage is descriptive.}
\label{tab:main}
\end{table*}

\begin{table}[t]
\centering
\footnotesize
\setlength{\tabcolsep}{1mm}
\begin{tabular}{@{}lcccc@{}}
\toprule
Variant & College & Email & Primary & High \\
\midrule
Full & \textbf{.047$\pm$.011} & \textbf{.052$\pm$.021} & .126$\pm$.045 & \textbf{.083$\pm$.027} \\
$\lambda=1$ & .052$\pm$.022 & .077$\pm$.021 & .206$\pm$.123 & .110$\pm$.040 \\
\midrule
Plug-in & .064$\pm$.019 & .056$\pm$.020 & \textbf{.125$\pm$.045} & .111$\pm$.050 \\
No $M$ & .199$\pm$.005 & .203$\pm$.006 & .231$\pm$.041 & .202$\pm$.010 \\
No $M$ + plug-in & .206$\pm$.007 & .204$\pm$.007 & .231$\pm$.040 & .219$\pm$.017 \\
No-$M$ IPW & .288$\pm$.058 & .318$\pm$.044 & .341$\pm$.111 & .371$\pm$.088 \\
Static exposure & .304$\pm$.025 & .318$\pm$.015 & .309$\pm$.040 & .299$\pm$.026 \\
\bottomrule
\end{tabular}
\caption{Mean $\pm$ SD overall RMSE across seven contrasts. Full denotes DynaNet-DR with its adaptive gate. Bold marks the dataset minimum. The $\lambda=1$ and plug-in rows share the full target; the other variants have target mismatch.}
\label{tab:ablation}
\end{table}

\begin{table}[t]
\centering
\footnotesize
\setlength{\tabcolsep}{1mm}
\begin{tabular}{@{}lccc@{\hspace{1.5mm}}ccc@{}}
\toprule
& \multicolumn{3}{c}{CollegeMsg} & \multicolumn{3}{c}{email-Eu} \\
\cmidrule(lr){2-4}\cmidrule(lr){5-7}
Method & Default & Outcome & Prop. & Default & Outcome & Prop. \\
\midrule
OR & \shortstack{.064\\(.019)} & \shortstack{.421\\(.047)} & \shortstack{.064\\(.019)}
& \shortstack{.056\\(.020)} & \shortstack{.474\\(.094)} & \shortstack{.056\\(.020)} \\
No-$M$ IPW & \shortstack{.288\\(.058)} & \shortstack{.288\\(.058)} & \shortstack{.305\\(.050)}
& \shortstack{.318\\(.044)} & \shortstack{.318\\(.044)} & \shortstack{.403\\(.042)} \\
TL & \shortstack{.054\\(.022)} & \shortstack{\textbf{.205}\\\textbf{(.069)}} & \shortstack{.052\\(.023)}
& \shortstack{.093\\(.026)} & \shortstack{\textbf{.170}\\\textbf{(.050)}} & \shortstack{.101\\(.030)} \\
GML-DR & \shortstack{.053\\(.023)} & \shortstack{.242\\(.060)} & \shortstack{.051\\(.020)}
& \shortstack{.081\\(.023)} & \shortstack{.222\\(.052)} & \shortstack{.088\\(.027)} \\
DynaNet & \shortstack{\textbf{.047}\\\textbf{(.011)}} & \shortstack{.213\\(.069)} & \shortstack{\textbf{.046}\\\textbf{(.014)}}
& \shortstack{\textbf{.052}\\\textbf{(.021)}} & \shortstack{.277\\(.047)} & \shortstack{\textbf{.053}\\\textbf{(.018)}} \\
\bottomrule
\end{tabular}
\caption{Mean overall RMSE, with SD in parentheses. OR is the full-target plug-in outcome regression, and DynaNet denotes DynaNet-DR. Bold marks each minimum. Outcome and Prop. restrict one nuisance model; only No-$M$ IPW has target mismatch.}
\label{tab:misspec}
\end{table}

\begin{figure*}[t]
\centering
\includegraphics[width=0.70\textwidth]{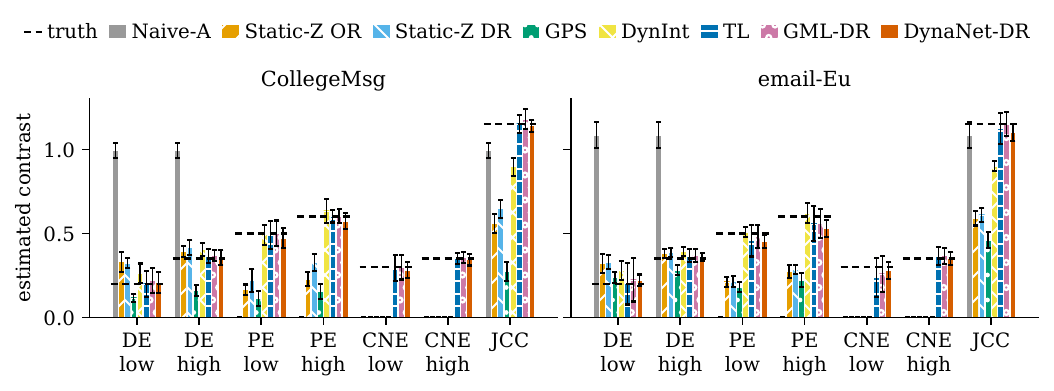}
\caption{Mean estimates and benchmark truths for seven contrasts on CollegeMsg and email-Eu. Error bars are one standard deviation across replications, not confidence intervals. Appendix~\ref{app:contrasts} defines the targets; Appendix~\ref{app:diagnostics} reports the other networks.}
\label{fig:decomp}
\end{figure*}

\begin{figure}[t]
\centering
\includegraphics[width=0.98\columnwidth]{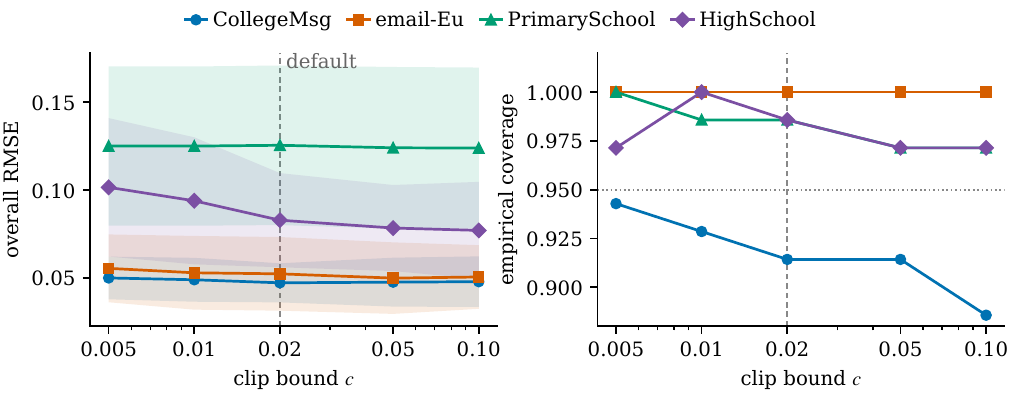}
\caption{DynaNet-DR overall RMSE and empirical coverage by clip bound. Bands show one standard deviation; dashed line marks $c=0.02$.}
\label{fig:clipmain}
\end{figure}

\subsection{Semi-Synthetic Benchmarks on Real Dynamic Networks}

We construct semi-synthetic benchmarks on CollegeMsg \citep{panzarasa2009patterns}, email-Eu \citep{paranjape2017motifs}, and the SocioPatterns Primary School \citep{stehle2011high} and High School 2012 \citep{fournet2014contact} networks. Their fixed graph sequences retain temporal structure, activity heterogeneity, and communities. On each sequence, the generator assigns treatment, derives exposure from assigned peer treatments and observed history, draws a post-assignment evolution-summary label, and generates outcomes with known contrasts; it does not regenerate edges or simulate an alternative graph sequence.

Treatment is logistic in observed history and a generated community-correlated trait $U_i$; $Z_{it}\in\{0,1,2\}$ discretizes a decayed, edge-weighted treated-neighbor share. Binary $M_{i,t+1}$ is logistic in $(A,Z)$, their interaction, history, and $U_i$; the continuous outcome follows the reported model with main effects and interactions. The default nuisance features include $U_i$, whereas the hidden-homophily analysis withholds it. Appendix~\ref{app:impl} gives the generator and dataset statistics.

\subsection{Baselines and Protocol}

Adapted targeted learning (TL; \citealp{chen2024doubly}), the adapted Graph Machine Learning Based Doubly Robust estimator (GML-DR; \citealp{khatami2025graph}), and DynaNet-DR target all reported $(A,Z,M)$ contrasts. The restricted designs are Naive-A \citep{imbens2015causal}, static exposure outcome regression and doubly robust estimators (Static-Z OR/DR; \citealp{robins1986new,bang2005doubly}), an adapted generalized propensity score estimator (GPS; \citealp{forastiere2021identification}), and an adapted dynamic interference model (DynInt; \citealp{lin2026modeling}). Static rows replace dynamic exposure, DynInt uses a representative continuous score, and designs without $M$ return CNE as zero and omit $M$ elsewhere. These rows diagnose target scope rather than same-estimand accuracy. No-$M$ IPW removes both $M$ and the outcome regression; the other ablations remove augmentation, $M$, or dynamic exposure. The $\lambda=1$ row retains the full target and normalized correction while removing only the adaptive gate. All rows use the same analysis population, ten replications, shared available features, and time-ordered 60/20/20 splits. GML-DR also uses embeddings. The joint propensity clip is $[0.02,0.98]$. Appendix~\ref{app:protocol} details the adaptations.

\subsection{Evaluation Metrics}

We report seven contrasts in four labeled families: own-treatment (DE), peer-exposure (PE), controlled network-evolution (CNE), and joint controlled (JCC). Here and in the figures, DE denotes the own-treatment family, not a mediation or path-specific direct effect. Table~\ref{tab:contrasts} lists the target levels and exact truths. For each method and replication $r$, the error on family $\mathcal{F}$ is the root mean squared error (RMSE) against these truths,
\begin{equation}
\mathrm{RMSE}^{(r)}_{\mathcal{F}} = \Bigl(\tfrac{1}{|\mathcal{F}|} \textstyle\sum_{k \in \mathcal{F}} \bigl(\widehat{\tau}^{(r)}_{k} - \tau_{k}\bigr)^{2}\Bigr)^{1/2},
\label{eq:rmse}
\end{equation}
where $k$ indexes contrasts and $r$ indexes replications. We average family RMSE over ten replications. Tables~\ref{tab:ablation} and \ref{tab:misspec} instead pool all seven contrasts. Empirical coverage is the fraction of 70 node-clustered nominal 95 percent intervals containing the truth. Because standard-error constructions differ, including adapted TL's node-clustered dispersion of updated predictions, coverage is descriptive rather than a like-for-like inferential comparison or evidence of validity under evolving-graph dependence. Appendix~\ref{app:diagnostics} reports overlap diagnostics.

\subsection{Main Results (RQ1, RQ2)}

Table~\ref{tab:main} separates same-estimand accuracy from restricted-design scope. Among adapted TL, adapted GML-DR, and DynaNet-DR, which all target $(A,Z,M)$, DynaNet-DR has the lowest unrounded mean RMSE in all 16 dataset-by-family cells; CollegeMsg PE is tied with adapted GML-DR at three-decimal precision. Its unweighted mean cellwise reduction relative to the better of TL and GML-DR is 25.9\%, with larger gains on email-Eu DE (0.026 versus 0.079) and PrimarySchool CNE (0.096 versus 0.192). These results concern the stated adaptations, not the original procedures.

Five restricted rows return CNE as zero, and Naive-A returns PE as zero. DynInt is lowest in two displayed cells, but its representative continuous score and omission of $M$ preclude full-profile comparison. DynaNet-DR's nominal intervals attain empirical coverage from 0.91 to 1.00.

The fixed $\lambda=1$ ablation holds the target, nuisance learners, clipping, splits, and interval construction fixed, so its comparison with Full isolates the adaptive gate within the reported pipeline. Full has lower mean overall RMSE on all four datasets, although the CollegeMsg difference is small and mixed across replications. Relative to plug-in prediction, the implemented correction lowers RMSE on three datasets and is nearly unchanged on PrimarySchool. This benchmark comparison does not establish that the gate is optimal beyond the evaluated settings. Appendices~\ref{app:diagnostics} and \ref{app:impl} report the remaining profiles and paired Wilcoxon tests.

\subsection{Nuisance-Model Stress Tests (RQ3)}

Table~\ref{tab:misspec} reports one-at-a-time nuisance-model restrictions for CollegeMsg and email-Eu. DynaNet-DR improves on the plug-in estimator under the outcome restriction but does not uniformly outperform TL or GML-DR; it performs best under the propensity restriction. These finite-sample stress tests do not verify Theorem~\ref{thm:dr}; Appendix~\ref{app:impl} specifies the restrictions.

As shown in Figure~\ref{fig:clipmain}, DynaNet-DR changes little on three datasets across the clip range. Appendix~\ref{app:clip} reports greater sensitivity for the restricted No-$M$ IPW reference. This does not validate the propensity-correct branch under active clipping; Appendix~\ref{app:hidden} reports hidden-homophily sensitivity.

\subsection{Illustrative MathOverflow Application}

We use MathOverflow \citep{paranjape2017motifs} as an observational illustration. In 30-day bins, $A$ indicates whether a user answers at least one question, $Z$ is the discretized decayed share of answering neighbors, $M$ indicates whether the user interacts with at least one previously unseen neighbor at $t+1$, and $Y=\log(1+\text{weighted degree})$ at $t+2$. Adjustment uses prior network and activity summaries, but some variables share a bin with $A$ and $Z$, leaving within-bin order unresolved; Appendix~\ref{app:impl} gives details.

Averaged across nuisance-model replications, the two estimates in each coordinate-wise family are DE 0.00 and 0.14, PE $-0.04$ and $-0.04$, and CNE 0.37 and 0.46; JCC is 0.44, and the separate naive difference is 0.95. Without ground truth or a hidden-confounding sensitivity analysis, we interpret them as descriptive, representative-score, model-adjusted contrasts. They are not additive components, and the application is not causal validation.

\section{Related Work}

\paragraph{Network causal inference.}
Exposure mappings formalize interference, while network experiments address it by design \citep{hudgens2008toward,aronow2017estimating,eckles2017design}. Observational identification relies on strong adjustment assumptions \citep{shalizi2011homophily,forastiere2021identification}. Graph-learning and dynamic methods learn exposure mappings or use evolving graphs as predictors \citep{ma2021causal,adhikari2025learning,lin2026modeling}; recent work extends doubly robust estimation to networks \citep{chen2024doubly,khatami2025graph}. We add a post-assignment summary while retaining pre-assignment history; Appendix~\ref{app:related} gives details.

\section{Conclusion}

DynaNet-DR estimates a controlled contrast profile using a temporally factorized propensity and normalized augmentation. Its canonical estimator is doubly robust under the stated conditions; stochastic graph interventions and cross-node inference remain future work.

\bibliography{dynanet_dr_v3}

\begin{thebibliography}{25}
\providecommand{\natexlab}[1]{#1}

\bibitem[{Adhikari, Medya, and Zheleva(2025)}]{adhikari2025learning}
Adhikari, S.; Medya, S.; and Zheleva, E. 2025.
\newblock Learning exposure mapping functions for inferring heterogeneous peer
  effects.
\newblock arXiv:2503.01722.

\bibitem[{Aronow and Samii(2017)}]{aronow2017estimating}
Aronow, P.~M.; and Samii, C. 2017.
\newblock Estimating average causal effects under general interference, with
  application to a social network experiment.
\newblock \emph{The Annals of Applied Statistics}, 11(4): 1912--1947.

\bibitem[{Bang and Robins(2005)}]{bang2005doubly}
Bang, H.; and Robins, J.~M. 2005.
\newblock Doubly robust estimation in missing data and causal inference models.
\newblock \emph{Biometrics}, 61(4): 962--973.

\bibitem[{Chen, Hu, and Jiang(2026)}]{chen2026connections}
Chen, S.; Hu, J.; and Jiang, Z. 2026.
\newblock Connections as treatment: causal inference with edge interventions in
  networks.
\newblock arXiv:2601.07267.

\bibitem[{Chen et~al.(2024)Chen, Cai, Yang, Qiao, Yan, Li, and
  Hao}]{chen2024doubly}
Chen, W.; Cai, R.; Yang, Z.; Qiao, J.; Yan, Y.; Li, Z.; and Hao, Z. 2024.
\newblock Doubly robust causal effect estimation under networked interference
  via targeted learning.
\newblock In \emph{Proceedings of the 41st International Conference on Machine
  Learning}, volume 235 of \emph{Proceedings of Machine Learning Research},
  6457--6485. PMLR.

\bibitem[{Comola and Prina(2021)}]{comola2021treatment}
Comola, M.; and Prina, S. 2021.
\newblock Treatment effect accounting for network changes.
\newblock \emph{The Review of Economics and Statistics}, 103(3): 597--604.

\bibitem[{Eckles, Karrer, and Ugander(2017)}]{eckles2017design}
Eckles, D.; Karrer, B.; and Ugander, J. 2017.
\newblock Design and analysis of experiments in networks: reducing bias from
  interference.
\newblock \emph{Journal of Causal Inference}, 5(1): 20150021.

\bibitem[{Forastiere, Airoldi, and Mealli(2021)}]{forastiere2021identification}
Forastiere, L.; Airoldi, E.~M.; and Mealli, F. 2021.
\newblock Identification and estimation of treatment and interference effects
  in observational studies on networks.
\newblock \emph{Journal of the American Statistical Association}, 116(534):
  901--918.

\bibitem[{Fournet and Barrat(2014)}]{fournet2014contact}
Fournet, J.; and Barrat, A. 2014.
\newblock Contact patterns among high school students.
\newblock \emph{PLoS ONE}, 9(9): e107878.

\bibitem[{Hern{\'a}n and Robins(2020)}]{hernan2020causal}
Hern{\'a}n, M.~A.; and Robins, J.~M. 2020.
\newblock \emph{Causal inference: what if}.
\newblock Boca Raton: Chapman \& Hall/CRC.

\bibitem[{Hudgens and Halloran(2008)}]{hudgens2008toward}
Hudgens, M.~G.; and Halloran, M.~E. 2008.
\newblock Toward causal inference with interference.
\newblock \emph{Journal of the American Statistical Association}, 103(482):
  832--842.

\bibitem[{Imbens and Rubin(2015)}]{imbens2015causal}
Imbens, G.~W.; and Rubin, D.~B. 2015.
\newblock \emph{Causal inference for statistics, social, and biomedical
  sciences: an introduction}.
\newblock Cambridge: Cambridge University Press.

\bibitem[{Khatami et~al.(2025)Khatami, Parikh, Chen, Roy, and
  Salimi}]{khatami2025graph}
Khatami, S.~B.; Parikh, H.; Chen, H.; Roy, S.; and Salimi, B. 2025.
\newblock Graph machine learning based doubly robust estimator for network
  causal effects.
\newblock In \emph{Proceedings of the 28th International Conference on
  Artificial Intelligence and Statistics}, volume 258 of \emph{Proceedings of
  Machine Learning Research}, 4366--4374. PMLR.

\bibitem[{Leung(2020)}]{leung2020treatment}
Leung, M.~P. 2020.
\newblock Treatment and spillover effects under network interference.
\newblock \emph{The Review of Economics and Statistics}, 102(2): 368--380.

\bibitem[{Lin et~al.(2026)Lin, Bao, Takeuchi, Cui, and
  Kashima}]{lin2026modeling}
Lin, X.; Bao, H.; Takeuchi, K.; Cui, Y.; and Kashima, H. 2026.
\newblock Modeling dynamic interference for treatment effect estimation from
  dynamic graphs.
\newblock \emph{ACM Transactions on Knowledge Discovery from Data}, 20(3):
  45:1--45:28.
\newblock {DOI}: 10.1145/3796240.

\bibitem[{Ma and Tresp(2021)}]{ma2021causal}
Ma, Y.; and Tresp, V. 2021.
\newblock Causal inference under networked interference and intervention policy
  enhancement.
\newblock In \emph{Proceedings of the 24th International Conference on
  Artificial Intelligence and Statistics}, volume 130 of \emph{Proceedings of
  Machine Learning Research}, 3700--3708. PMLR.

\bibitem[{Manski(1993)}]{manski1993identification}
Manski, C.~F. 1993.
\newblock Identification of endogenous social effects: the reflection problem.
\newblock \emph{The Review of Economic Studies}, 60(3): 531--542.

\bibitem[{Ogburn et~al.(2024)Ogburn, Sofrygin, D{\'i}az, and van~der
  Laan}]{ogburn2024causal}
Ogburn, E.~L.; Sofrygin, O.; D{\'i}az, I.; and van~der Laan, M.~J. 2024.
\newblock Causal inference for social network data.
\newblock \emph{Journal of the American Statistical Association}, 119(545):
  597--611.

\bibitem[{Panzarasa, Opsahl, and Carley(2009)}]{panzarasa2009patterns}
Panzarasa, P.; Opsahl, T.; and Carley, K.~M. 2009.
\newblock Patterns and dynamics of users' behavior and interaction: network
  analysis of an online community.
\newblock \emph{Journal of the American Society for Information Science and
  Technology}, 60(5): 911--932.

\bibitem[{Paranjape, Benson, and Leskovec(2017)}]{paranjape2017motifs}
Paranjape, A.; Benson, A.~R.; and Leskovec, J. 2017.
\newblock Motifs in temporal networks.
\newblock In \emph{Proceedings of the Tenth ACM International Conference on Web
  Search and Data Mining}, 601--610.

\bibitem[{Robins(1986)}]{robins1986new}
Robins, J. 1986.
\newblock A new approach to causal inference in mortality studies with a
  sustained exposure period: application to control of the healthy worker
  survivor effect.
\newblock \emph{Mathematical Modelling}, 7(9-12): 1393--1512.

\bibitem[{S{\"a}vje(2024)}]{savje2024causal}
S{\"a}vje, F. 2024.
\newblock Causal inference with misspecified exposure mappings: separating
  definitions and assumptions.
\newblock \emph{Biometrika}, 111(1): 1--15.

\bibitem[{Shalizi and Thomas(2011)}]{shalizi2011homophily}
Shalizi, C.~R.; and Thomas, A.~C. 2011.
\newblock Homophily and contagion are generically confounded in observational
  social network studies.
\newblock \emph{Sociological Methods \& Research}, 40(2): 211--239.

\bibitem[{Stehl{\'e} et~al.(2011)Stehl{\'e}, Voirin, Barrat, Cattuto, Isella,
  Pinton, Quaggiotto, Van~den Broeck, R{\'e}gis, Lina, and
  Vanhems}]{stehle2011high}
Stehl{\'e}, J.; Voirin, N.; Barrat, A.; Cattuto, C.; Isella, L.; Pinton, J.-F.;
  Quaggiotto, M.; Van~den Broeck, W.; R{\'e}gis, C.; Lina, B.; and Vanhems, P.
  2011.
\newblock High-resolution measurements of face-to-face contact patterns in a
  primary school.
\newblock \emph{PLoS ONE}, 6(8): e23176.

\bibitem[{van~der Laan and Rubin(2006)}]{vanderlaan2006targeted}
van~der Laan, M.~J.; and Rubin, D. 2006.
\newblock Targeted maximum likelihood learning.
\newblock \emph{The International Journal of Biostatistics}, 2(1): Article 11.

\end{thebibliography}

\end{document}